\newtheorem{thm}{Theorem}
\newtheorem{rem}{Remark}
\newcommand{\bo}{{\mathbf o}}
\newcommand{\cN}{{\mathcal N}}
\newcommand{\bx}{{\bf x}}
\def\minwrt[#1]{\underset{#1}{\text{minimize}}}
\def\maxwrt[#1]{\underset{#1}{\text{maximize }}}
\begin{document}
\title{\LARGE \bf Inference of collective Gaussian hidden Markov models}


\author{Rahul Singh and Yongxin Chen
\thanks{This work was supported by NSF under grant 1942523 and 2008513.}
\thanks{R. Singh and Y.\ Chen are with the School of Aerospace Engineering,
Georgia Institute of Technology, Atlanta, GA; {\{rasingh@gatech.edu,yongchen\}@gatech.edu}}}

\maketitle
\begin{abstract}
We consider inference problems for a class of continuous state collective hidden Markov models, where the data is recorded in aggregate (collective) form generated by a large population of individuals following the same dynamics. We propose an aggregate inference algorithm called collective Gaussian forward-backward algorithm, extending recently proposed Sinkhorn belief propagation algorithm to models characterized by Gaussian densities. Our algorithm enjoys convergence guarantee. In addition, it reduces to the standard Kalman filter when the observations are generated by a single individual. The efficacy of the proposed algorithm is demonstrated through multiple experiments. 
\end{abstract}

\section{Introduction}
\label{sec:intro}
Filtering and smoothing problems in dynamic systems have been studied in a variety of applications including navigation, object tracking, robot localization, and control~\cite{LorNae11,And79}. These problems can be more generally posed as a Bayesian inference problem in probabilistic graphical models (PGMs)~\cite{WaiJor08}. For instance, the Kalman filter model is a special type of PGMs known as hidden Markov model (HMM) with linear Gaussian state and observation densities, which we call Gaussian HMM in this paper. The traditional filtering and smoothing methods are based on the observations recorded by a single individual.


The problem of modeling population-level observations has gained much attention in recent years~\cite{SinHaaZha20,SheDie11}. The observations are often recorded in aggregate form in terms of population counts due to various reasons such as privacy concerns, cost of data collection, and measurement fidelity. A timely example is modeling the spread of COVID-19 in a geographical area over time. In this example, the observation sequence may represent aggregate features including number of deaths and number of positives, based on which one might be interested in predicting number of infections in future. Collective filtering have emerged as a method to model such aggregate observations. Some more example applications of collective filtering include bird migration analysis~\cite{SinHaaZha20_incremental} and estimating the crowd flow in an urban environment~\cite{HasRinChe19}. Traditional filtering algorithms such as belief propagation~\cite{Pea88} and the Kalman filter are not applicable to collective settings due to data aggregation.

Collective filtering has been studied under the umbrella of recently proposed more general framework known as collective graphical models (CGMs)~\cite{SheDie11,SinHaaZha20}. Various aggregate inference algorithms under the CGM framework has been proposed including approximate MAP inference~\cite{SheSunKumDie13}, non-linear belief propagation~\cite{SunSheKum15}, and Sinkhorn belief propagation (SBP)~\cite{SinHaaZha20}. The collective forward-backward algorithm (CFB)~\cite{SinHaaZha20} is a special case of SBP for aggregate inference in CGMs. However, all of these algorithms are only applicable to CGMs with discrete states. A very recent work on filtering for continuous time HMMs is collective feedback particle filter~\cite{KimMeh20}.

In this paper, we are interested in inference problem in collective HMMs with continuous states. The continuous observations recorded by a large number of agents, with each agent following a certain underlying model, are aggregated such that the individuals are indistinguishable from the measurements. In such data aggregation, the individual's association with the measurements remains unknown. We propose an algorithm for estimating aggregate state distributions in collective HMMs with Gaussian densities, which we call the collective Gaussian forward-backward (CGFB) algorithm. Our algorithm is based on the recently proposed CFB algorithm~\cite{SinHaaZha20,SinHaaZha20_incremental}, which has convergence guarantees. We also propose a sliding window filter for faster inference extending SW-SBP~\cite{SinHaaZha20_incremental} to Gaussian HMMs. Furthermore, we show that in case of a single individual, our algorithm naturally reduces to the standard Kalman filter and smoother. We demonstrate the performance of algorithm via multiple numerical experiments. 

Rest of the paper is organized as follows. In Section~\ref{sec:background}, we briefly discuss related background including collective HMMs. We present our main results and algorithm in Section~\ref{sec:main_result}. Next, we discuss the connections of our algorithm with the standard Kalman filter in Section~\ref{sec:connection_kalman}. This is followed by experimental results in Section~\ref{sec:experiments} and conclusion in Section~\ref{sec:conclusion}.

\section{Background}
\label{sec:background}
In this section, we briefly discuss related background including HMMs, collective HMMs, and collective forward-backward algorithm. 
\subsection{Hidden Markov Models}
\label{subsec:hmm}
Hidden Markov models (HMMs) consist of a Markov process describing the true (hidden) state of a dynamic system and an observation process corrupted by noise. Let us denote the state variables $X_1,X_2,\ldots$ and observation variables $O_1,O_2,\ldots$. An HMM is parameterized by the initial distribution $p(X_1)$, the state transition probabilities $p(X_{t+1} \mid X_t)$, and the observation probabilities $p(O_{t} \mid X_{t})$ for each time step $t = 1,2,\ldots$. An HMM of length $T$ is represented graphically as shown in Figure~\ref{fig:hmm_model}. The joint distribution of an HMM with length $T$ can be factorized as
\begin{equation} \label{eq:HMM_distribution}
    p(\bx, \bo) = p(x_1)~ \prod_{t=1}^{T-1} ~p(x_{t+1} \mid x_{t}) ~ \prod_{t=1}^{T}~p(o_t \mid x_t),
\end{equation}
where $\bx = \{x_1,x_2,\ldots, x_T\}$ and $\bo = \{o_1,o_2,\ldots, o_T\}$ represent a particular assignment of hidden and observation variables, respectively. 

The state and observation variables can take either discrete or continuous values. In this paper, we assume that both state as well as observation variables take continuous values unless otherwise specified. In standard inference problems over HMMs, the observations from a single individual $\hat{o}_1,\hat{o}_2,\ldots,\hat{o}_T$ are recorded over time, based on which the hidden state distributions $p(x_t|\hat{o}_{1:T})$ are estimated. The standard forward-backward algorithm~\cite{Rab89} is a popular method for inference over HMMs. Note that when the estimation of state distributions at time step $t$ is made based on only the current and past observations $\hat{o}_{1:t}$, it is called \textit{filtering} and when the future observations are also taken into account, it is more generally termed as \textit{inference} or \textit{smoothing}. 

\begin{figure}[h]
\centering
\includegraphics[scale=0.8]{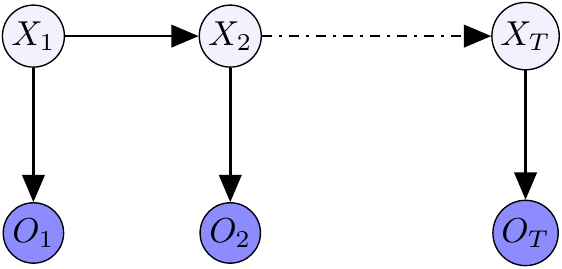}
\caption{Graphical representation of a length $T$ HMM.}
\label{fig:hmm_model}
\end{figure}
\subsection{Collective Hidden Markov Models}
\label{subsec:collective_hmm}
Collective (aggregate) HMMs refer to the scenario when the data is generated by $M$ individuals independently transitioning from state to state according to the same Markov chain and the corresponding noisy observations are recorded in aggregate form such that the association to the corresponding state is not known. 
Let $X_t^{(m)}$ be the random variable representing the state of $m^{th}$ individual at time $t$ and $O_t^{(m)}$ be the observation variable of $m^{th}$ individual at time $t$. The observations are made in aggregate form $y_t(o_t)$ representing the distribution of the collective observations for each time step $t=1,2,\ldots,T$. The goal of inference in collective HMMs is to estimate the aggregate state distributions $n_t(x_t)$ given all the aggregate observation distributions. A pictorial representation of a collective HMM is depicted in Figure~\ref{fig:aggregate_model}. Here, $n_t(x_t)$ is an estimate of the state distribution of the $M$ agents at time step $t$.  

\begin{figure}[h]
\centering
\includegraphics[scale=0.8]{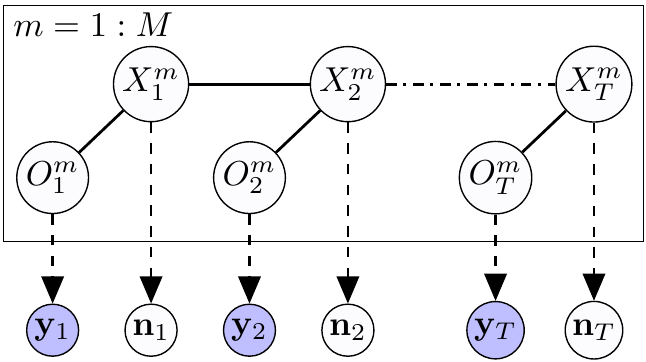}
\caption{Aggregate HMM (shaded nodes represent aggregate noisy observations).}
\label{fig:aggregate_model}
\end{figure}

Inference in aggregate HMMs aims to estimate the aggregate hidden distributions based on indistinguishable aggregate measurements. Traditional inference algorithms such as forward-backward algorithms can not be used here due to data aggregation. The collective forward-backward (CFB) algorithm~\cite{SinHaaZha20} was recently proposed for aggregate inference in HMMs. It is a special case of more general aggregate inference algorithm known as Sinkhorn belief propagation (SBP)~\cite{SinHaaZha20}, which has convergence guarantee. 

\subsection{Collective Forward-Backward Algorithm} \label{subsec:collective_forward_backward}
The collective forward-backward algorithm~\cite{SinHaaZha20} is a message passing type inference method for aggregate HMMs. It employs propagating four different types of messages over the underlying HMM as shown in Figure~\ref{fig:hmm_message}. The CFB algorithm was originally proposed to handle discrete state and discrete observation HMMs and was recently extended to discrete state and continuous observation settings in \cite{ZhaSinChe20}. The messages in the CFB algorithm are computed as
\begin{subequations}\label{eq:forward_backward_discrete}
\begin{eqnarray}
    \alpha_t(x_t) \propto \sum_{x_{t-1}}  p(x_t|x_{t-1}) \alpha_{t-1} (x_{t-1}) \gamma_{t-1}(x_{t-1}) , \label{eq:forward_backward1}  \\
    \beta_t(x_t) \propto  \sum_{x_{t+1}} p(x_{t+1}|x_{t}) \beta_{t+1} (x_{t+1}) \gamma_{t+1}(x_{t+1}), \label{eq:forward_backward2} \\
    \gamma_t(x_t) \propto \sum_{o_{t}} p(o_{t}|x_{t}) \frac{y_t(o_t)}{\xi_t(o_t)}, \label{eq:forward_backward3} \\
    \xi_t(o_t) \propto \sum_{x_{t}} p(o_{t}|x_{t}) \alpha_{t} (x_{t}) \beta_{t}(x_{t}), \label{eq:forward_backward4}
\end{eqnarray}
\end{subequations}
with boundary conditions 
\begin{equation*}
    \alpha_1(x_1) = p(x_1) \quad \text{and} \quad \beta_T(x_T) = 1.
\end{equation*}

\begin{figure}[h]
\centering
\includegraphics[scale=0.8]{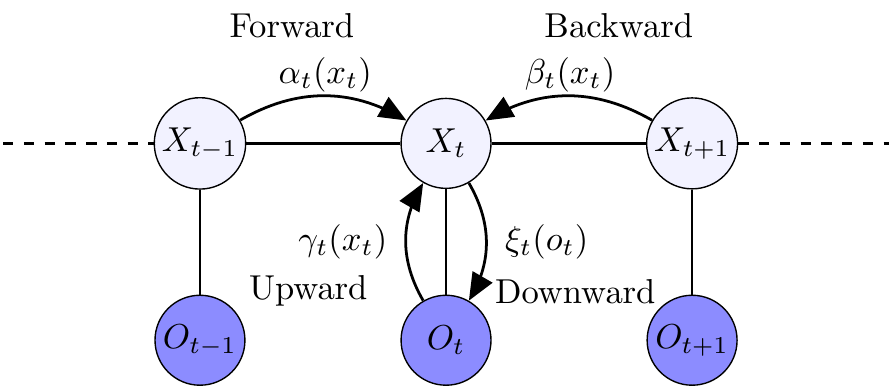}
\caption{Messages for inference in collective HMM.}
\label{fig:hmm_message}
\end{figure}

The update sequence of these messages has a two-pass structure: a forward pass followed by a backward pass. In forward pass, $\gamma_t$, $\alpha_t$, and $\xi_t$ are updated $\forall t = 1,2,\ldots,T$, and in backward pass, $\gamma_t$, $\beta_t$, and $\xi_t$ are updated $\forall t = T, T-1,\ldots,1$. The algorithm is guaranteed to converge upon which, the required aggregate hidden state distributions are estimated as 
\begin{align*}
    n_t(x_t) \propto \alpha_t(x_t) \beta_t(x_t) \gamma_t(x_t),~\forall t= 1,2,\ldots, T.
\end{align*}

\section{Main Results}
\label{sec:main_result}

We propose collective Gaussian forward-backward (CGFB) algorithm for inference in aggregate Gaussian hidden Markov models (GHMMs).  
GHMMs are special case of continuous state HMMs where the model densities take the form
\begin{subequations}\label{eq:hmm_density}
\begin{eqnarray}
    p(X_{t+1}|X_{t},\theta_X) = \cN(x_{t+1}; A x_{t}, Q)  \label{eq:hgmm_density1}  \\
    p(O_t|X_t,\theta_1) = \cN(o_t; C x_t, R)  \label{eq:hgmm_density2}  \\
    p(X_1|\theta_1) = \cN(x_1; \pi, \Pi),  \label{eq:hgmm_density3} 
\end{eqnarray}
\end{subequations}

where $\theta_X = \{A,Q \}$, $\theta_O = \{C, R \}$, and $\theta_1 = \{\pi, \Pi \}$ are the parameters characterizing model densities in the HGMM model. Alternatively, the model densities for $t=1,2,\ldots$ can be characterized via the following set of equations.
\begin{subequations}
\begin{eqnarray}
   X_{t+1} =  A X_{t} + W_t \\
    O_t = C X_t + V_t, 
\end{eqnarray}
\end{subequations}
where $W_t \sim \cN(w_t; 0, Q)$, $V_t \sim \cN(v_t; 0, R)$, and $X_1 \sim \cN(x_1; \pi, \Pi)$.


\subsection{Collective GHMM Inference: Collective Gaussian Forward-Backward Algorithm}
\label{subsec:collective_GHMM}

We have a total of $M$ trajectories of continuous observations over a single GHMM characterized by \eqref{eq:hmm_density}. The recorded observations are $\{ o_1^{(m)}, o_2^{(m)}, \ldots , o_t^{(m)}\},~\forall m=1,2,\ldots,M$ with $o_t^{(m)}$ being the continuous observations of the $m^{th}$ trajectory at time $t$. The objective of the collective GHMM inference is to estimate the distributions $n_t(x_t),~\forall t$. We assume that the aggregate observation is approximated by a Gaussian density at each time step $t$, i.e., 
\begin{equation}
    y_t(o_t) \sim \cN(o_t; \hat{\mu}_t, \hat{P}_t),
\end{equation}
where $\hat{\mu}_t$ and $\hat{P}_t$ can be estimated from the recorded observations.

For continuous states and continuous observations, the four messages in the CFB algorithm take the form

\begin{subequations}\label{eq:forward_backward_continuous}
\begin{eqnarray}
    \alpha_t(x_t) \propto \int  p(x_t|x_{t-1}) \alpha_{t-1} (x_{t-1}) \gamma_{t-1}(x_{t-1}) ~dx_{t-1}, \label{eq:forward_backwardcts1}  \\
    \beta_t(x_t) \propto  \int p(x_{t+1}|x_{t}) \beta_{t+1} (x_{t+1}) \gamma_{t+1}(x_{t+1})~dx_{t+1}, \label{eq:forward_backwardcts2} \\
    \gamma_t(x_t) \propto \int p(o_{t}|x_{t}) \frac{y_t(o_t)}{\xi_t(o_t)}~do_{t}, \label{eq:forward_backwardcts3} \\
    \xi_t(o_t) \propto \int p(o_{t}|x_{t}) \alpha_{t} (x_{t}) \beta_{t}(x_{t})~dx_{t}, \label{eq:forward_backwardcts4}
\end{eqnarray}
\end{subequations}
with boundary conditions 
\begin{equation*}
    \alpha_1(x_1) = p(x_1) \quad \text{and} \quad \beta_T(x_T) = 1.
\end{equation*}

Based on above, the messages in collective GHMMs are characterized by Theorem~\ref{thm:message_GHMM}. 

\begin{thm} \label{thm:message_GHMM}
The forward, backward, upward, and downward messages in collective GHMM take the following form, respectively.

\begin{subequations}
\begin{eqnarray}
    \alpha_{t}(x_{t})  \propto ~\mathrm{exp} \left( - \frac{1}{2} x^T \Lambda_t^{(f)} x + x^T \eta_t^{(f)}  \right) , \\
    \beta_{t}(x_{t}) \propto ~\mathrm{exp} \left( - \frac{1}{2} x^T \Lambda_t^{(b)} x + x^T \eta_t^{(b)}  \right), \\
     \gamma_{t}(x_{t}) \propto ~\mathrm{exp} \left( - \frac{1}{2} x^T \Lambda_t^{(u)} x + x^T \eta_t^{(u)}  \right), \\
      \xi_{t}(o_{t}) \propto ~\mathrm{exp} \left( - \frac{1}{2} x^T \Lambda_t^{(d)} x + x^T \eta_t^{(d)}  \right),
\end{eqnarray}
\end{subequations}
for $t=1,2,\ldots,T$. Here, the message parameters are the fixed points of the following recursive updates
\begin{eqnarray*}
    \Lambda_t^{(f)} = Q^{-1} - Q^{-1} A( A^TQ^{-1}A + \Lambda_{t-1}^{(f)} + \Lambda_{t-1}^{(u)})^{-1} A^T Q^{-1} \\
    \eta_t^{(f)} = Q^{-1} A(A^T Q^{-1} A + \Lambda_{t-1}^{(f)} + \Lambda_{t-1}^{(u)})^{-1} (\eta_{t-1}^{(f)} + \eta_{t-1}^{(u)}) \\
    \Lambda_t^{(b)} = A^TQ^{-1} (Q^{-1} + \Lambda_{t+1}^{(b)} + \Lambda_{t+1}^{(u)})^{-1} (\Lambda_{t+1}^{(b)} + \Lambda_{t+1}^{(u)}) A\\
    \eta_t^{(b)} =  A^T Q^{-1} (Q^{-1} + \Lambda_{t+1}^{(b)} + \Lambda_{t+1}^{(u)})^{-1} (\eta_{t+1}^{(b)} + \eta_{t+1}^{(u)})\\
    \Lambda_t^{(d)} = R^{-1} - R^{-1} C( C^T R^{-1}C + \Lambda_{t}^{(f)} + \Lambda_{t}^{(b)})^{-1} C^T R^{-1} \\
    \eta_t^{(d)} =  R^{-1} C(C^T R^{-1} C + \Lambda_{t}^{(f)} + \Lambda_{t}^{(b)})^{-1} (\eta_{t}^{(f)} + \eta_{t}^{(b)}) \\
    \Lambda_t^{(u)} =  C^T (R + (\hat{P}_t^{-1} - \Lambda_{t}^{(d)})^{-1} )^{-1} C \\
    \eta_t^{(u)} =  C^T R^{-1} (R^{-1} + \hat{P}_t^{-1} - \Lambda_{t}^{(d)})^{-1} (\hat{P}_t^{-1} \hat{\mu}_{t} - \eta_{t}^{(d)})
\end{eqnarray*}
with boundary conditions
\begin{align*}
    \Lambda_1^{(f)} = \Pi^{-1},~\eta_1^{(f)} = \Pi^{-1} \pi,\quad \Lambda_T^{(b)} = 0,~\eta_T^{(b)} = 0.
\end{align*}
Moreover, the required marginals can be computed as 
\begin{align*}
    n_t(x_t) &\propto \alpha_t(x_t)\beta_t(x_t)\gamma_t(x_t) \\
    &\propto \cN(x_t;\mu_t,P_t),
\end{align*}
where
\begin{subequations} \label{eq:ghmm_state_density_params}
\begin{eqnarray}
    P_t =  ( \Lambda_t^{(f)} +  \Lambda_t^{(b)} +  \Lambda_t^{(u)})^{-1} \\
    \mu_t = P_t (\eta_t^{(f)} + \eta_t^{(b)} + \eta_t^{(u)}).
\end{eqnarray}
\end{subequations}
\end{thm}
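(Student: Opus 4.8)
The plan is to argue by induction along the forward and backward sweeps, showing that the four integral recursions~\eqref{eq:forward_backward_continuous} map messages of the stated canonical (information) form $\exp(-\tfrac12 x^{T}\Lambda x + x^{T}\eta)$ to messages of the same form, with the parameter pair $(\Lambda,\eta)$ transformed exactly by the displayed updates. Only two elementary Gaussian facts are needed. (i) A pointwise product of such exponentials has precision equal to the sum of the $\Lambda$'s and linear term equal to the sum of the $\eta$'s. (ii) If an exponential in the joint variable $(x,z)$ has precision blocks $\Lambda_{xx}$, $\Lambda_{xz}=\Lambda_{zx}^{T}$, $\Lambda_{zz}$ and linear part $(\eta_{x},\eta_{z})$, then $\int\exp(\cdot)\,dz$ is again an exponential in $x$, with precision $\Lambda_{xx}-\Lambda_{xz}\Lambda_{zz}^{-1}\Lambda_{zx}$ and linear part $\eta_{x}-\Lambda_{xz}\Lambda_{zz}^{-1}\eta_{z}$ (Schur-complement marginalization). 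The boundary cases are immediate from~\eqref{eq:hmm_density}: $\alpha_{1}=p(x_{1})=\cN(x_{1};\pi,\Pi)$ gives $\Lambda_{1}^{(f)}=\Pi^{-1}$ and $\eta_{1}^{(f)}=\Pi^{-1}\pi$, while $\beta_{T}\equiv 1$ gives $\Lambda_{T}^{(b)}=0$ and $\eta_{T}^{(b)}=0$.

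For the forward message I would substitute $p(x_{t}\mid x_{t-1})=\cN(x_{t};Ax_{t-1},Q)$ together with the product $\alpha_{t-1}\gamma_{t-1}$ --- which by (i) has precision $\Lambda_{t-1}^{(f)}+\Lambda_{t-1}^{(u)}$ and linear term $\eta_{t-1}^{(f)}+\eta_{t-1}^{(u)}$ --- into~\eqref{eq:forward_backwardcts1}, then expand $-\tfrac12(x_{t}-Ax_{t-1})^{T}Q^{-1}(x_{t}-Ax_{t-1})$ to display the joint exponent in $(x_{t},x_{t-1})$ with $\Lambda_{xx}=Q^{-1}$, $\Lambda_{xz}=-Q^{-1}A$, $\Lambda_{zz}=A^{T}Q^{-1}A+\Lambda_{t-1}^{(f)}+\Lambda_{t-1}^{(u)}$, $\eta_{x}=0$, $\eta_{z}=\eta_{t-1}^{(f)}+\eta_{t-1}^{(u)}$, and finally apply (ii) with $z=x_{t-1}$ to obtain the stated $\Lambda_{t}^{(f)},\eta_{t}^{(f)}$. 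The other three messages go the same way: for $\beta_{t}$, feed $p(x_{t+1}\mid x_{t})$ and $\beta_{t+1}\gamma_{t+1}$ into~\eqref{eq:forward_backwardcts2} and marginalize $x_{t+1}$; for $\xi_{t}$, feed $p(o_{t}\mid x_{t})=\cN(o_{t};Cx_{t},R)$ and $\alpha_{t}\beta_{t}$ into~\eqref{eq:forward_backwardcts4} and marginalize $x_{t}$; for $\gamma_{t}$, first note that the ratio is itself an exponential, $y_{t}(o_{t})/\xi_{t}(o_{t})\propto\exp(-\tfrac12 o_{t}^{T}(\hat P_{t}^{-1}-\Lambda_{t}^{(d)})o_{t}+o_{t}^{T}(\hat P_{t}^{-1}\hat\mu_{t}-\eta_{t}^{(d)}))$, multiply by $p(o_{t}\mid x_{t})$, and marginalize $o_{t}$ in~\eqref{eq:forward_backwardcts3}.

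The only step that is not purely mechanical is rewriting the raw Schur-complement outputs into the particular forms appearing in the statement. For $\beta_{t}$ one rewrites $A^{T}Q^{-1}A-A^{T}Q^{-1}(Q^{-1}+S)^{-1}Q^{-1}A$, with $S=\Lambda_{t+1}^{(b)}+\Lambda_{t+1}^{(u)}$, as $A^{T}Q^{-1}(Q^{-1}+S)^{-1}SA$ via the identity $Q^{-1}-Q^{-1}(Q^{-1}+S)^{-1}Q^{-1}=Q^{-1}(Q^{-1}+S)^{-1}S$; for $\gamma_{t}$ one invokes the matrix inversion lemma $R^{-1}-R^{-1}(R^{-1}+S)^{-1}R^{-1}=(R+S^{-1})^{-1}$ with $S=\hat P_{t}^{-1}-\Lambda_{t}^{(d)}$ to get $\Lambda_{t}^{(u)}=C^{T}(R+(\hat P_{t}^{-1}-\Lambda_{t}^{(d)})^{-1})^{-1}C$. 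I expect this algebraic rearrangement, together with the bookkeeping required to ensure every indicated inverse exists --- equivalently, that the precision matrices playing the role of $\Lambda_{zz}$ stay positive definite so the Gaussian integrals converge --- to be the only places needing care; existence of the fixed point itself is inherited from the convergence guarantee of SBP~\cite{SinHaaZha20}. The marginal then follows from (i): $\alpha_{t}\beta_{t}\gamma_{t}\propto\exp(-\tfrac12 x_{t}^{T}(\Lambda_{t}^{(f)}+\Lambda_{t}^{(b)}+\Lambda_{t}^{(u)})x_{t}+x_{t}^{T}(\eta_{t}^{(f)}+\eta_{t}^{(b)}+\eta_{t}^{(u)}))$, which is $\cN(x_{t};\mu_{t},P_{t})$ with $P_{t}$ and $\mu_{t}$ as in~\eqref{eq:ghmm_state_density_params}.
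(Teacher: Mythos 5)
Your proposal is correct and, at bottom, performs the same Gaussian algebra as the paper, but it is organized along a genuinely more uniform route. The paper treats the four messages with two different techniques: for the forward and downward messages it first passes to moment form, recognizing the integral as $\cN(x_t;A\mu_{t-1}^{(fu)},Q+AP_{t-1}^{(fu)}A^T)$ and then converting back to canonical parameters via the matrix inversion lemma; for the backward and upward messages it instead completes the square in the integrated variable explicitly, introducing auxiliary matrices $\Sigma_1,\Sigma_2,\Sigma_3$ and collecting the residual quadratics in $Ax_t$ (resp.\ $Cx_t$). You replace both with a single information-form Schur-complement marginalization applied to the joint exponent, which buys two things: (a) all four recursions fall out of one lemma with only the terminal algebraic rewritings you identify (the identity $Q^{-1}-Q^{-1}(Q^{-1}+S)^{-1}Q^{-1}=Q^{-1}(Q^{-1}+S)^{-1}S$ for $\beta_t$ and Woodbury for $\gamma_t$), and (b) you never need the mean parameters $\mu_t^{(b)}=(\Lambda_t^{(b)})^{-1}\eta_t^{(b)}$ or $\mu_t^{(d)}$ to exist, whereas the paper's backward and upward derivations write the messages as $(x-\mu)^T\Lambda(x-\mu)$ and hence are formally degenerate at the boundary $\Lambda_T^{(b)}=0$ (the paper's final formulas remain valid, but only because they are re-expressed in terms of $\eta$ at the end). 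The paper's moment-form presentation is closer to the familiar Kalman prediction step and makes the connection of Section~\ref{sec:connection_kalman} more transparent, but your canonical-form argument is the cleaner proof of the theorem as stated. Your deferral of fixed-point existence to the SBP convergence guarantee matches what the paper does implicitly; the one point you flag but do not discharge, positive definiteness of the $\Lambda_{zz}$ blocks so that every Gaussian integral converges, is likewise left implicit in the paper.
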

\begin{proof}
See Appendix~\ref{appendix:proof_message_GHMM}
\end{proof}

\begin{figure}[t]
\centering
\includegraphics[scale=0.8]{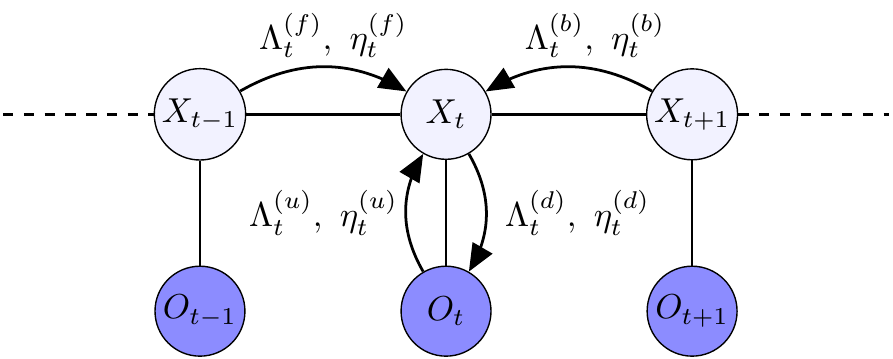}
\caption{Messages for inference in collective GHMM.}
\label{fig:ghmm_message}
\end{figure}

\begin{algorithm}[t]
   \caption{Collective Gaussian Forward-Backward (CGFB) Algorithm}
   \label{alg:collective_GHMM_forward_backward}
\begin{algorithmic}
   \STATE Initialize all the message parameters 
   \WHILE{not converged}
   \STATE \textbf{Forward pass:}
   \FOR{$t = 2,3,\ldots,T$}
        \STATE i) Update upward parameters $\Lambda_{t-1}^{(u)}$ and $\eta_{t-1}^{(u)}$
        \STATE ii) Update forward parameters $\Lambda_t^{(f)}$ and $ \eta_t^{(f)}$
        \STATE ii) Update downward parameters $\Lambda_t^{(d)}$ and $ \eta_t^{(d)}$
    \ENDFOR
    \STATE \textbf{Backward pass:}
    \FOR{$t = T-1,\ldots,1$}
        \STATE i) Update upward parameters $\Lambda_{t+1}^{(u)}$ and $\eta_{t+1}^{(u)}$
        \STATE ii) Update backward parameters $\Lambda_t^{(b)}$ and $ \eta_t^{(b)}$
        \STATE ii) Update downward parameters $\Lambda_t^{(d)}$ and $ \eta_t^{(d)}$
    \ENDFOR
    \ENDWHILE
    \STATE Estimate required state density parameters $\mu_t$ and $P_t$
\end{algorithmic}
\end{algorithm}

Based on the above Theorem, we propose collective Gaussian forward-backward (CGFB) algorithm (Algorithm~\ref{alg:collective_GHMM_forward_backward}) for aggregate inference in collective GHMMs. All the four kinds of messages involved in the algorithm are depicted in Figure~\ref{fig:ghmm_message}. The CGFB algorithm consists of two parts: a forward pass and a backward pass that are updated alternatively until convergence. Upon convergence, the required parameters of the aggregate state distributions can be computed using Equation~\eqref{eq:ghmm_state_density_params}.

\subsection{Sliding Window Filter for Fast Inference}
\label{subsec:sliding_window}

The size of the underlying GHMM increases with time and filtering becomes more and more expensive computationally. Therefore for real-time aggregate filtering, it is not suitable to consider the full length model. Similar to the recently proposed sliding window Sinkhorn belief propagation (SW-SBP) algorithm~\cite{SinHaaZha20_incremental}, the inference in collective GHMM can be performed in an incremental (online) fashion. Based on SW-SBP, we propose sliding window collective Gaussian forward backward (SW-CGFB) algorithm that employs a sliding window filter of length $K$ such that at each time step, only the most recent $K$ (aggregate) observations are taken into account for estimating the current state density. Moreover in order to encode the discarded information, the forward messages are utilized as the prior.
\begin{figure}[h]
\centering
\includegraphics[scale=0.6]{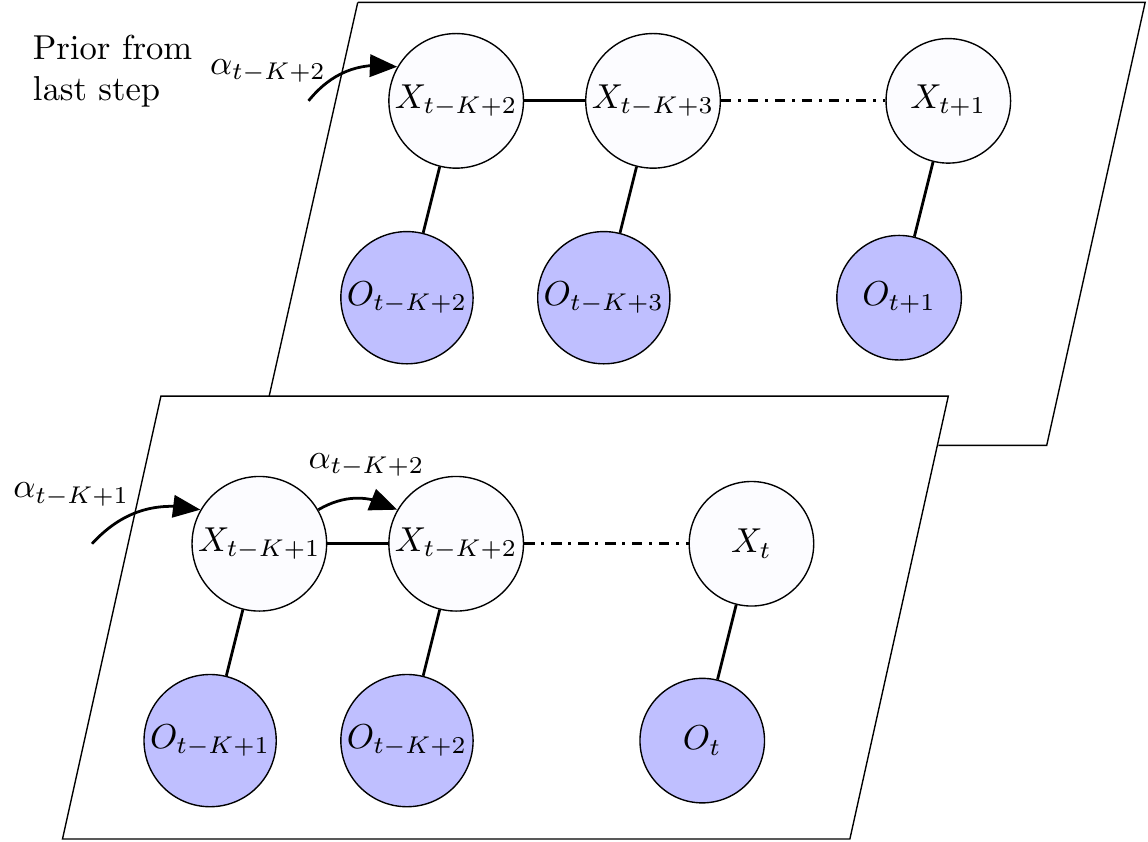}
\caption{Sliding window filter for incremental inference in collective GHMMs.}
\label{fig:sliding_window_ghmm}
\end{figure}

The sliding window filter scheme for inference in collective GHMMs is illustrated in Figure~\ref{fig:sliding_window_ghmm} with window size of $K$. When the new observation comes in at time $t+1$, the left-most state variable is assigned a prior same as the forward message to the variable at time $t$ which is indicative of the discarded information. The CGFB algorithm is run in the $K$ length GHMM  with this prior represented by the forward message. 

\section{Connections to Standard Kalman Filter}
\label{sec:connection_kalman}
Suppose we know the estimated state distribution $n_t(x_t) \propto \cN(x_t; \mu_t, P_t)$ and as the single new observation $\hat{o}_{t+1}$ is recorded at time $t+1$, the goal is to estimate the state distribution $n_{t+1}(x_{t+1}) = p(x_{t+1}|\hat{o}_{1:t+1})$. This is the setting of the standard Kalman filter. We show that the standard Kalman filter is equivalent to the SW-CGFB algorithm for the case of a single individual ($M=1$) and unit window size ($K=1$).

The sliding window filter scheme with window size $K=1$ for a single individual is shown in Figure~\ref{fig:kalman_message}. Note that here the backward messages do not contribute to the state density estimation due to the boundary condition in Algorithm~\ref{alg:collective_GHMM_forward_backward}. Moreover, the upward messages do not depend on the downward message and reduce to the corresponding observation probabilities of the underlying GHMM~\cite{SinHaaZha20}. 

\begin{figure}[h]
\centering
\includegraphics[scale=0.8]{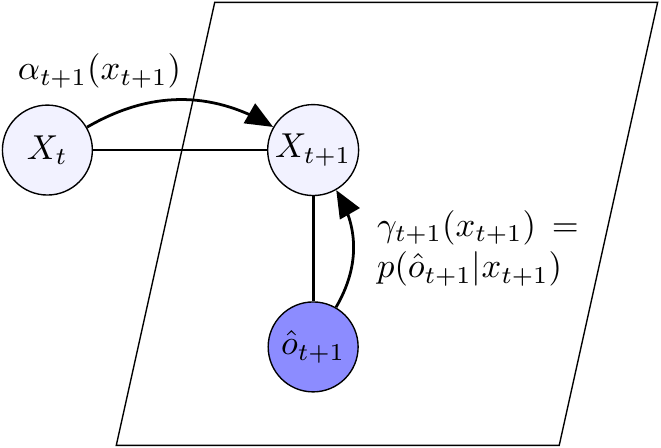}
\caption{Sliding window filter with window size $K=1$ and number of agents $M=1$. }
\label{fig:kalman_message}
\end{figure}

Identifying the forward message $\alpha_{t+1}(x_{t+1})$ as prediction step and then incorporating the upward message from the observation node as correction step, the state density can be estimated as 
\begin{equation}
   n_{t+1}(x_{t+1}) \propto ~ \underbrace{\underbrace{\alpha_{t+1}(x_{t+1})}_{\mathbf{prediction}} ~~\gamma_{t+1}(x_{t+1}) }_{\mathbf{correction}}.
\end{equation}

Here the predicted state density takes the form
\begin{align*}
    \alpha_{t+1} (x_{t+1}) \propto&~ \int p(x_{t+1} | x_t) \alpha_t(x_t) \gamma_t(x_t) ~dx_t \\
    \propto&~ \int p(x_{t+1} | x_t) ~n_t(x_t)~dx_t \\
    =&~ \int \cN(x_{t+1}; Ax_t, Q)~ \cN(x_t; \mu_t, P_t)  ~dx_t \\
    \propto&~  \cN(x_{t+1}; \mu_{t+1 | t}, P_{t+1|t}) ,
\end{align*}
with 
\begin{subequations}\label{eq:kalman_prediction}
\begin{eqnarray}
    P_{t+1|t} = Q + A P_{t|t} A^T \\
    \mu_{t+1|t} = A \mu_{t|t}.
\end{eqnarray}
\end{subequations}
Note that in general, the parameters $\mu_t$ and $P_t$ of the estimated aggregate state distribution $n_t(x_t)$ also depend on the future observation. However, in the setting of sliding window filter, they are same as $\mu_{t|t}$ and $P_{t|t}$ taking only the current and past observations into account. After the prediction step, the correction step aims to estimate the required state distribution as
\begin{align*}
    n_{t+1}(x_{t+1}) \propto& ~ \alpha_{t+1}(x_{t+1}) \gamma_{t+1}(x_{t+1}) \\
    \propto& ~ \cN(x_{t+1}; \mu_{t+1 | t}, P_{t+1|t})~ p(\hat{o}_{t+1} | x_{t+1})\\
    \propto& ~ \cN(x_{t+1}; \mu_{t+1 | t}, P_{t+1|t}) ~\cN(\hat{o}_{t+1}; C x_{t+1}, R)  \\
    \propto& ~ \mathrm{exp}\left( - \frac{1}{2} x_{t+1}^T (P_{t+1|t}^{-1} + C^T R^{-1} C) x_{t+1} \right. \\
    & \quad \quad \quad \quad \left. +~ x^T (P_{t+1|t}^{-1}\mu_{t+1 | t} + C^T R^{-1} \hat{o}_{t+1} ) \right) \\
    \propto& ~ \mathrm{exp}\left( - \frac{1}{2} (x_{t+1} - \mu_{t+1|t+1})^T (P_{t+1|t}^{-1}  \right. \\ 
    & \quad \quad \quad \quad \left. + C^T R^{-1} C) (x_{t+1} - \mu_{t+1|t+1}) \right) \\
    \propto& ~ \cN(x_{t+1}; \mu_{t+1|t+1}, P_{t+1|t+1}),
\end{align*}
with
\begin{subequations}\label{eq:kalman_correction}
\begin{eqnarray}
    P_{t+1|t+1} = (I - K_{t+1} C) P_{t+1|t} \\
    \mu_{t+1|t+1} = \mu_{t+1|t} + K_{t+1} (\hat{o}_{t+1} - C \mu_{t+1| t}) \\
    K_{t+1} = P_{t+1|t} C^T ( R + C P_{t+1|t} C^T)^{-1}.
\end{eqnarray}
\end{subequations}

Equations \eqref{eq:kalman_prediction} and \eqref{eq:kalman_correction} describe the prediction and correction updates of the standard Kalman filter, respectively, and its full derivation can be found in Appendix~\ref{appendix:kalman_proof1}. 


\begin{rem}
Similarly, it can be shown that the message updates in Algorithm~\ref{alg:collective_GHMM_forward_backward} reduce to the standard Kalman smoother updates for the case of a single individual, i.e., $M=1$.
\end{rem}

\section{Numerical Experiments}
\label{sec:experiments}
We conduct multiple experiments to evaluate the performance of our CGFB algorithm. We simulate a system with the GHMM parameters:
\begin{align*}
    \pi &= \begin{bmatrix} 1 \\ 0 \end{bmatrix}, \quad \Pi = \begin{bmatrix} 1 & 0.2 \\ 0.2 & 1 \end{bmatrix} \\
    A &= \begin{bmatrix} 1 & \Delta t \\ -\Delta t & 1 - 0.5 \Delta t \end{bmatrix}, \quad C = \begin{bmatrix} 0 & \Delta t\end{bmatrix}\\
    Q &= \Delta t \begin{bmatrix} 0.1 & 0 \\ 0 & 0.1 \end{bmatrix}, \quad R = \Delta t \begin{bmatrix} 0.7 \end{bmatrix},
\end{align*}
where $\Delta t$ is set to $0.05$ for all our experiments. Based on the above GHMM parameters, we generate $M$ number of trajectories and record the aggregate observations $\{ o_1^{(m)}, o_2^{(m)}, \ldots , o_T^{(m)}\},~\forall m=1,2,\ldots,M$. We then use CGFB algorithm to estimate the aggregate state distributions $n_t(x_t),~\forall t=1,2,\ldots,T$, which are compared against the ground truth distributions in terms of quadratic error averaged over all the time steps $t$. Figure~\ref{fig:convergence} depicts error curves for mean and covariance estimates with respect to the corresponding ground truth means and covariances. It can be clearly observed that the algorithm converges and estimated parameters are very close to the ground truth. 
\begin{figure}[h]
\centering
\includegraphics[scale=0.18]{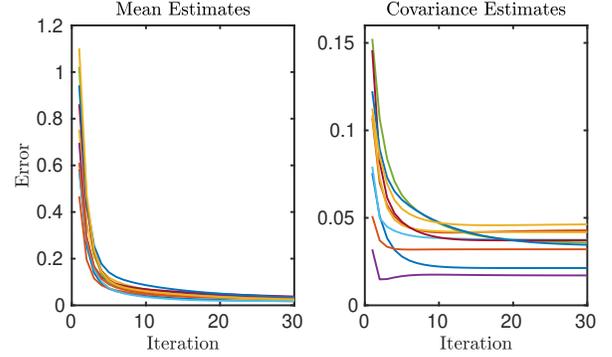}
\caption{Convergence of CGFB with respect to the corresponding ground truths with $M=200$ and $T=100$. Different colors represent different random seeds.}
\label{fig:convergence}
\end{figure}
\begin{figure}[h]
\centering
\includegraphics[scale=0.18]{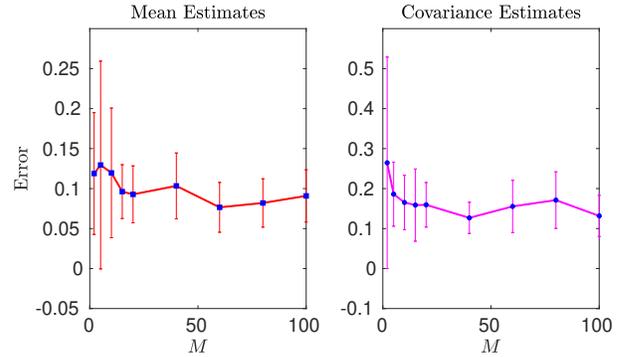}
\caption{Quadratic errors with respect to the ground truth with $T=100$ and varying number of agents $M$. The plot is averaged over 10 different seeds; the vertical bars represent the corresponding standard deviations.}
\label{fig:different_M}
\end{figure}

Next, we study the effect of number of agents $M$ on the estimation of aggregate state distributions. Figure~\ref{fig:different_M} shows the error values for different number of agents. We observe that errors in both mean and covariance estimates get smaller as the number of agents $M$ increase.  Furthermore, we evaluate the performance of CGFB against $M$ independent standard Kalman filters assuming the identities of each particle and measurement are known. Let the Kalman filter mean and covariance estimates for $M$ agents at time $T$ be $\mu_T^{m}$  and $P_T^{m}$ for $m=1,2,\ldots, M$. We compute the mean and variances of these $M$ Kalman estimations as
\begin{align*}
    \mu_T^{\mathrm{KF}} &= \frac{1}{M} \sum_{m=1}^M \mu_T^m \\
    P_T^{\mathrm{KF}} &= \frac{1}{M} \sum_{m=1}^M P_T^m + (\mu_T^{m} - \mu_T^{\mathrm{KF}}) (\mu_T^{m} - \mu_T^{\mathrm{KF}})^T
\end{align*}
and use them to compare against the estimates obtained by the CGFB algorithm. This comparison in terms of quadratic error is shown in Figure~\ref{fig:kalman_compare}.

\begin{figure}[h]
\centering
\includegraphics[scale=0.17]{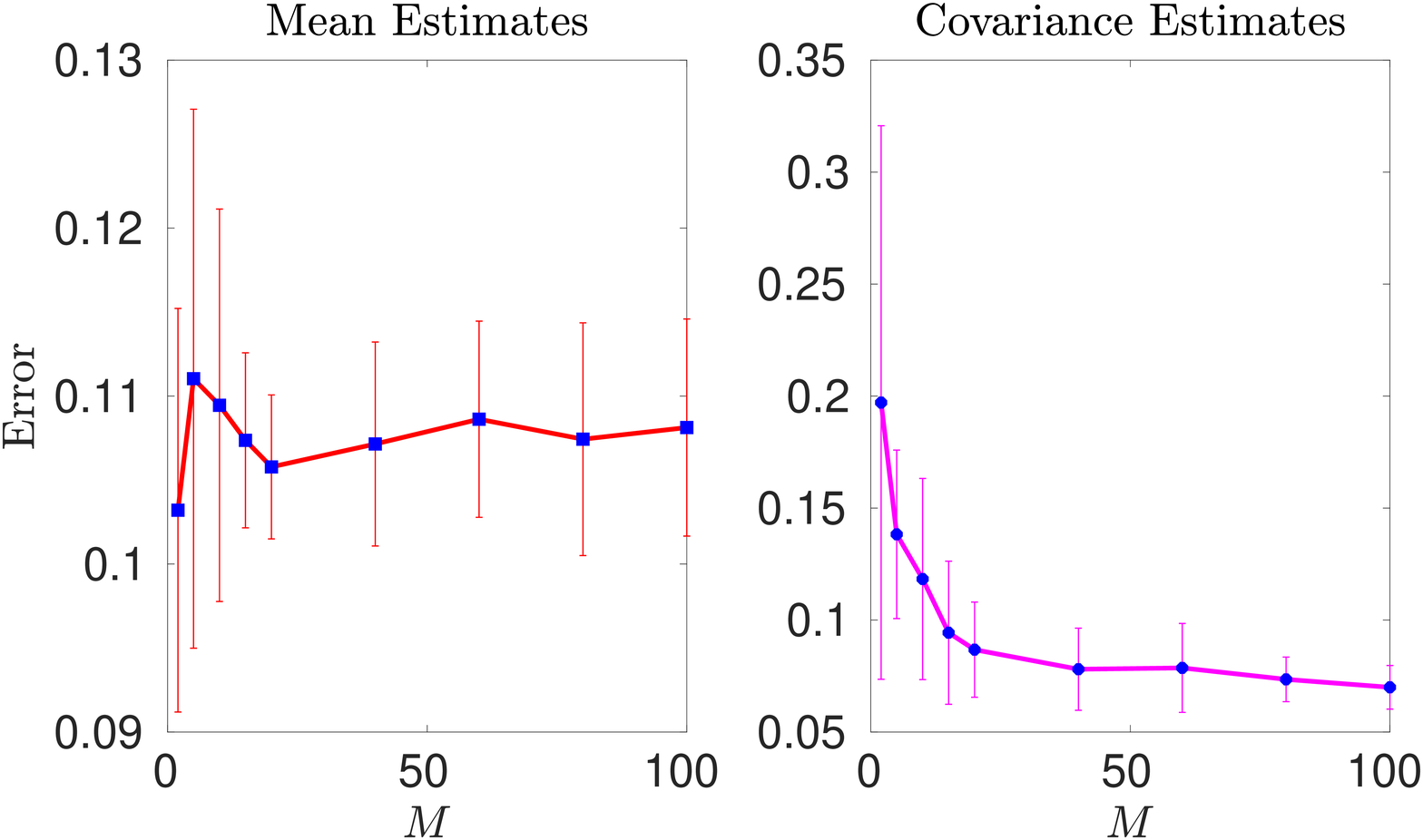}
\caption{Error comparison against Kalman filter estimates for $T=100$}
\label{fig:kalman_compare}
\end{figure}

\begin{figure}[h]
\centering
\includegraphics[scale=0.18]{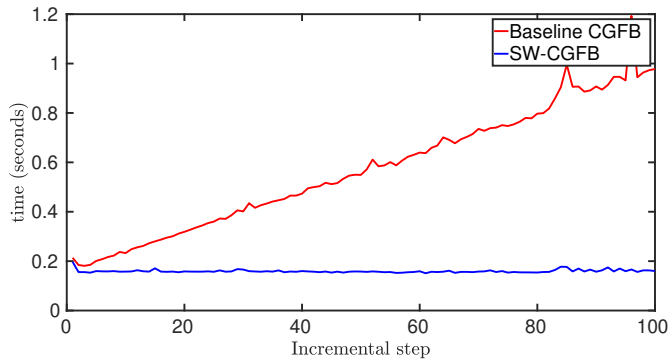}
\caption{Comparison of time taken by the baseline CGFB (full length graph) and SW-CGFB (sliding window). Here, $M=100,~K=20$. }
\label{fig:time_plot}
\end{figure}
\begin{figure}[h]
\centering
\includegraphics[scale=0.5]{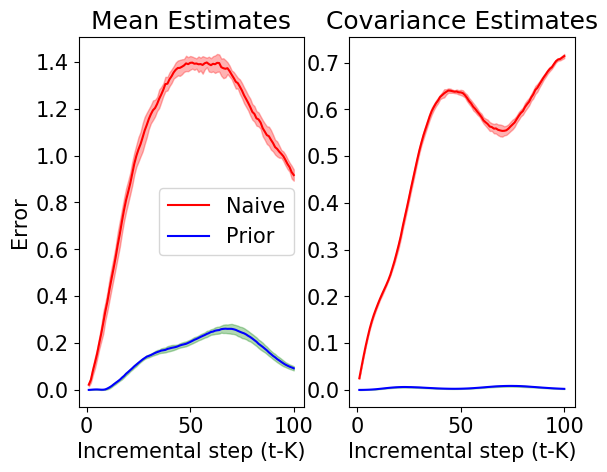}
\caption{Sliding window filter with window size $K=20$ and number of agents $M=100$. }
\label{fig:sliding_window_K20}
\end{figure}

\begin{figure}[h]
\centering
\includegraphics[scale=0.5]{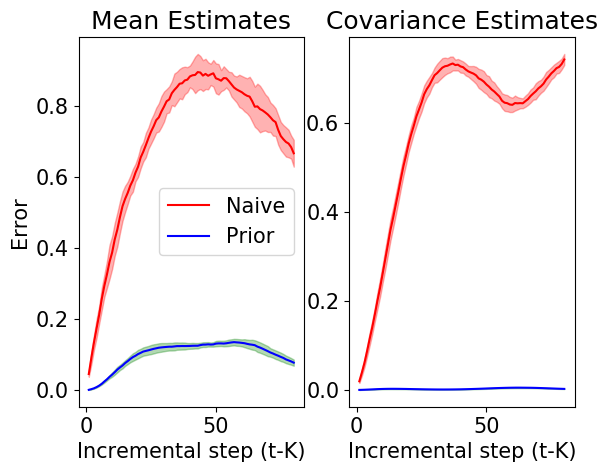}
\caption{Error comparison of sliding window filters with window size $K=30$ and number of agents $M=100$.}
\label{fig:sliding_window_K30}
\end{figure}

Finally, we evaluate the performance of our algorithm with sliding window filter. First, we present the comparison of time complexity between the full length graph that we refer as baseline CGFB and SW-CGFB in Figure~\ref{fig:time_plot}. It is evident that the time taken by the baseline CGFB increases linearly with time, while SW-CGFB takes a constant amount of time at each step. Now we compare the SW-CGFB algorithm against the baseline. The naive version of SW-CGFB refers to the sliding window filter without considering the prior from the previous step. Figures~\ref{fig:sliding_window_K20} and \ref{fig:sliding_window_K30} depict the error performance for different values of window sizes. In both the figures, the results are averaged over 10 different seeds and the shaded areas represent the corresponding standard deviations. It can be observed that the naive sliding window filter without considering the prior performs poorly as compared to the one with prior. Moreover, the errors in the mean estimates decrease with the increase in window size.

\section{Conclusion}
\label{sec:conclusion}
In this paper, we proposed an algorithm for filtering and inference in collective GHMMs, where the data from several individuals is recorded in aggregate form such that the observations are indistinguishable. The algorithm enjoys convergence guarantees and naturally reduces to the standard Kalman filter for the case of a single individual. We also proposed a sliding window filter scheme for fast inference in collective GHMMs.

\bibliography{gauss_hmm}

\begin{thebibliography}{10}
\providecommand{\url}[1]{#1}
\csname url@rmstyle\endcsname
\providecommand{\newblock}{\relax}
\providecommand{\bibinfo}[2]{#2}
\providecommand\BIBentrySTDinterwordspacing{\spaceskip=0pt\relax}
\providecommand\BIBentryALTinterwordstretchfactor{4}
\providecommand\BIBentryALTinterwordspacing{\spaceskip=\fontdimen2\font plus
\BIBentryALTinterwordstretchfactor\fontdimen3\font minus
  \fontdimen4\font\relax}
\providecommand\BIBforeignlanguage[2]{{%
\expandafter\ifx\csname l@#1\endcsname\relax
\typeout{** WARNING: IEEEtran.bst: No hyphenation pattern has been}%
\typeout{** loaded for the language `#1'. Using the pattern for}%
\typeout{** the default language instead.}%
\else
\language=\csname l@#1\endcsname
\fi
#2}}

\bibitem{LorNae11}
R.~J. Lorentzen and G.~N{\ae}vdal, ``An iterative ensemble {K}alman filter,''
  \emph{IEEE Transactions on Automatic Control}, vol.~56, no.~8, pp.
  1990--1995, 2011.

\bibitem{And79}
B.~{Anderson} and J.~{Moore}, \emph{Optimal Filtering}.\hskip 1em plus 0.5em
  minus 0.4em\relax Prentice-Hall, 1979.

\bibitem{WaiJor08}
M.~J. Wainwright and M.~I. Jordan, ``Graphical models, exponential families,
  and variational inference,'' \emph{Foundations and Trends{\textregistered} in
  Machine Learning}, vol.~1, no. 1--2, pp. 1--305, 2008.

\bibitem{SinHaaZha20}
R.~Singh, I.~Haasler, Q.~Zhang, J.~Karlsson, and Y.~Chen, ``Inference with
  aggregate data: An optimal transport approach,'' in \emph{Under Review},
  2020.

\bibitem{SheDie11}
D.~R. Sheldon and T.~G. Dietterich, ``Collective graphical models,'' in
  \emph{Advances in Neural Information Processing Systems}, 2011, pp.
  1161--1169.

\bibitem{SinHaaZha20_incremental}
R.~Singh, I.~Haasler, Q.~Zhang, J.~Karlsson, and Y.~Chen, ``Incremental
  inference of collective graphical models,'' \emph{IEEE Control Systems
  Letters}, vol.~5, no.~2, pp. 421--426, 2020.

\bibitem{HasRinChe19}
I.~Haasler, A.~Ringh, Y.~Chen, and J.~Karlsson, ``Estimating ensemble flows on
  a hidden {M}arkov chain,'' in \emph{58th IEEE Conference on Decision and
  Control}, 2019.

\bibitem{Pea88}
J.~Pearl, ``Probabilistic reasoning in intelligent systems: Networks of
  plausible inference,'' \emph{Morgan Kaufmann Publishers Inc}, 1988.

\bibitem{SheSunKumDie13}
D.~Sheldon, T.~Sun, A.~Kumar, and T.~Dietterich, ``Approximate inference in
  collective graphical models,'' in \emph{International Conference on Machine
  Learning}, 2013, pp. 1004--1012.

\bibitem{SunSheKum15}
T.~Sun, D.~Sheldon, and A.~Kumar, ``Message passing for collective graphical
  models,'' in \emph{International Conference on Machine Learning}, 2015, pp.
  853--861.

\bibitem{KimMeh20}
J.~W. Kim and P.~G. Mehta, ``Feedback particle filter for collective
  inference,'' \emph{arXiv preprint arXiv:2010.06655}, 2020.

\bibitem{Rab89}
L.~R. Rabiner, ``A tutorial on hidden markov models and selected applications
  in speech recognition,'' \emph{Proceedings of the IEEE}, vol.~77, no.~2, pp.
  257--286, 1989.

\bibitem{ZhaSinChe20}
Q.~Zhang, R.~Singh, and Y.~Chen, ``Filtering for aggregate hidden markov models
  with continuous observations,'' in \emph{Under Review}, 2020.

\end{thebibliography}
\bibliographystyle{IEEEtran}


\appendix

\subsection{Proof of Theorem~\ref{thm:message_GHMM}}
\label{appendix:proof_message_GHMM}
\begin{proof}
The product messages take the form
\begin{align*}
     \alpha_t(x_t) \gamma_t(x_t) &\propto~ \mathrm{exp} \left(- \frac{1}{2} x^T (\Lambda_t^{(f)} + \Lambda_t^{(u)}) x + x^T (\eta_t^{(f)} + \eta_t^{(u)}) \right)\\
     \beta_t(x_t) \gamma_t(x_t) &\propto ~ \mathrm{exp} \left(- \frac{1}{2} x^T (\Lambda_t^{(b)} + \Lambda_t^{(u)}) x + x^T (\eta_t^{(b)} + \eta_t^{(u)}) \right)\\
      \alpha_t(x_t) \beta_t(x_t) &\propto~ \mathrm{exp} \left(- \frac{1}{2} x^T (\Lambda_t^{(f)} + \Lambda_t^{(b)}) x + x^T (\eta_t^{(f)} + \eta_t^{(b)}) \right).
\end{align*}


\textbf{Forward Messages:}

The forward messages can be written as
\begin{align*}
    \alpha_t(x_t) &\propto \int p(x_t|x_{t-1}) \alpha_{t-1} (x_{t-1}) \gamma_{t-1}(x_{t-1}) ~dx_{t-1} \\
    &\propto \int \mathrm{exp} \left(- \frac{1}{2} (x_t - A x_{t-1})^T Q^{-1} (x_t - A x_{t-1})  \right) \\
    & \quad \mathrm{exp} \left(- \frac{1}{2} x_{t-1}^T (\Lambda_{t-1}^{(f)} + \Lambda_{t-1}^{(u)})x_{t-1}  + x_{t-1}^T ( \eta_{t-1}^{(f)} \right. \\
    & \quad \left. + \eta_{t-1}^{(u)}) \right) ~dx_{t-1} \\
    &\propto \cN \left( x_t; A \mu_{t-1}^{(fu)}, Q + A P_{t-1}^{(fu)} A^T \right),
\end{align*}

where $\mu_{t-1}^{(fu)}$ and $P_{t-1}^{(fu)}$ are the mean and covariance of the product message $\alpha_{t-1}(x_{t-1}) \gamma_{t-1}(x_{t-1})$, respectively. The above equation simplifies in canonical form as 

\begin{align*}
    \Lambda_t^{(f)} &= (Q + A P_{t-1}^{(fu)} A^T)^{-1} \\
    &= Q^{-1} - Q^{-1} A( A^TQ^{-1}A + \Lambda_{t-1}^{(fu)})^{-1} A^T Q^{-1} \\
    &= Q^{-1} - Q^{-1} A( A^TQ^{-1}A + \Lambda_{t-1}^{(f)} + \Lambda_{t-1}^{(u)})^{-1} A^T Q^{-1}
\end{align*}

\begin{align*}
    \eta_t^{(f)} &= \Lambda_t^{(f)} A \mu_{t-1}^{(fu)} \\
    &= (Q + A P_{t-1}^{(fu)} A^T )^{-1} (A P_{t-1}^{(fu)} \eta_{t-1}^{(fu)}) \\
    &= Q^{-1} A(A^T Q^{-1} A + \Lambda_{t-1}^{(fu)})^{-1} \eta_{t-1}^{(fu)} \\
    &= Q^{-1} A(A^T Q^{-1} A + \Lambda_{t-1}^{(f)} + \Lambda_{t-1}^{(u)})^{-1} (\eta_{t-1}^{(f)} + \eta_{t-1}^{(u)}).
\end{align*}


\textbf{Backward Messages:}

\begin{align*}
    p(x_{t+1}|x_{t}) & \beta_{t+1} (x_{t+1}) \gamma_{t+1}(x_{t+1})\\
    \propto &~ \mathrm{exp} \left( -\frac{1}{2} (x_{t+1} - Ax_t)^T Q^{-1} (x_{t+1} -Ax_t) \right. \\
    & \left. -\frac{1}{2} (x_{t+1} - \mu_{t+1}^{(b)})^T (\Lambda_{t+1}^{(b)}) (x_{t+1} - \mu_{t+1}^{(b)}) \right. \\
    & \left.  -\frac{1}{2} (x_{t+1} - \mu_{t+1}^{(u)})^T (\Lambda_{t+1}^{(u)}) (x_{t+1} - \mu_{t+1}^{(u)}) \right) \\
    \propto&~ \mathrm{exp} \left( -\frac{1}{2} (x_{t+1} - \nu)^T \Sigma_1 (x_{t+1} - \nu) \right. \\
    & \left. -\frac{1}{2} (Ax_{t} - \mu_{t+1}^{(b)})^T \Sigma_2 (A x_{t} - \mu_{t+1}^{(b)}) \right. \\
    & \left. -\frac{1}{2} (Ax_{t} - \mu_{t+1}^{(u)})^T \Sigma_3 (A x_{t} - \mu_{t+1}^{(u)}) \right),
\end{align*}
where
\begin{subequations}
\begin{eqnarray}
    \Sigma_1 = (Q^{-1} + \Lambda_{t+1}^{(b)} + \Lambda_{t+1}^{(u)})^{-1}  \\
    \Sigma_2 = Q^{-1} \Sigma_1 \Lambda_{t+1}^{(b)} \\
    \Sigma_3 = Q^{-1} \Sigma_1 \Lambda_{t+1}^{(u)} 
\end{eqnarray}
\end{subequations}
Therefore,
\begin{align*}
    \beta_t(x_t) &\propto~ \int p(x_{t+1}|x_{t}) \beta_{t+1} (x_{t+1}) \gamma_{t+1}(x_{t+1}) ~dx_{t+1} \\
    &\propto~\mathrm{exp} \left( - \frac{1}{2}\{ x_{t}^T A^T (\Sigma_2 + \Sigma_3)A x \right. \\
    & \quad \quad \quad \quad \left. +~ x_t^T (A^T \Sigma_2 \mu_{t+1}^{(b)} + A^T \Sigma_3 \mu_{t+1}^{(u)}) \}   \right) \\
    & \propto~\mathrm{exp} \left( - \frac{1}{2}\{ x_{t}^T \Lambda_t^{(b)} x + x_t^T \eta_t^{(b)} \}   \right)
\end{align*}
with
\begin{align*}
    \Lambda_t^{(b)} &=  A^T (\Sigma_2 + \Sigma_3)A \\
    &= A^TQ^{-1} (Q^{-1} + \Lambda_{t+1}^{(b)} + \Lambda_{t+1}^{(u)})^{-1} (\Lambda_{t+1}^{(b)} + \Lambda_{t+1}^{(u)}) A \\
    \eta_t^{(b)} &= A^T \Sigma_2 \mu_{t+1}^{(b)} + A^T \Sigma_3 \mu_{t+1}^{(u)} \\
    &= A^T Q^{-1} (Q^{-1} + \Lambda_{t+1}^{(b)} + \Lambda_{t+1}^{(u)})^{-1} (\Lambda_{t+1}^{(b)} \mu_{t+1}^{(b)} + \Lambda_{t+1}^{(u)} \mu_{t+1}^{(u)}) \\
    &= A^T Q^{-1} (Q^{-1} + \Lambda_{t+1}^{(b)} + \Lambda_{t+1}^{(u)})^{-1} (\eta_{t+1}^{(b)} + \eta_{t+1}^{(u)})
\end{align*}

\textbf{Downward Messages:}

The downward density can be simplified as 
\begin{align*}
    \xi_t(o_t) &\propto \int p(o_{t}|x_{t}) \alpha_{t} (x_{t}) \beta_{t}(x_{t})~dx_{t} \\
    &\propto \int \cN(o_t; Cx_t,R) \cN\left(x_t; \mu_{t}^{(fb)}, P_{t}^{(fb)}\right) ~dx_{t} \\
    &\propto \cN\left( o_t; C \mu_t^{(fb)}, R + C P_t^{(fb)} C^T \right),
\end{align*}
where $\mu_{t}^{(fb)}$ and $P_{t}^{(fb)}$ are the mean and covariance of product message $\alpha_{t}(x_{t}) \beta_{t}(x_{t})$, respectively. In canonical form, it further reduces to

\begin{align*}
    \Lambda_t^{(d)} &= (R + C P_{t}^{(fb)} C^T)^{-1} \\
    &= R^{-1} - R^{-1} C( C^T R^{-1}C + \Lambda_{t}^{(fb)})^{-1} C^T R^{-1} \\
    &= R^{-1} - R^{-1} C( C^TR^{-1}C + \Lambda_{t}^{(f)} + \Lambda_{t}^{(b)})^{-1} C^T R^{-1}
\end{align*}

\begin{align*}
    \eta_t^{(d)} &= \Lambda_t^{(d)} C \mu_{t}^{(fb)} \\
    &= (R + C P_{t}^{(fb)} C^T )^{-1} (C P_{t}^{(fb)} \eta_{t}^{(fb)}) \\
    &= R^{-1} C(C^T R^{-1} C + \Lambda_{t}^{(fb)})^{-1} \eta_{t}^{(fb)} \\
    &= R^{-1} C(C^T R^{-1} C + \Lambda_{t}^{(f)} + \Lambda_{t}^{(b)})^{-1} (\eta_{t}^{(f)} + \eta_{t}^{(b)}).
\end{align*}
\textbf{Upward Messages:}\\

Assuming that the output samples take the form of Gaussian density
\begin{equation}
    y_t(o_t) = \cN(o_t; \hat{\mu}_t, \hat{P}_t),
\end{equation}

\begin{align*}
    p(o_{t}|x_{t}) \frac{y_t(o_t)}{\xi_t(o_t)}  \propto &~ \mathrm{exp} \left( -\frac{1}{2} o_{t} - Cx_t)^T R^{-1} (o_{t} - Cx_t) \right. \\
    & \left. -\frac{1}{2} (o_{t} - \hat{\mu}_{t})^T \hat{P}_t^{-1} (o_{t} - \hat{\mu}_t) \right. \\
    & \left.  +\frac{1}{2} (o_{t} - \mu_{t}^{(d)})^T \Lambda_{t}^{(d)} (o_{t} - \mu_{t}^{(d)}) \right) \\
    \propto&~ \mathrm{exp} \left( -\frac{1}{2} (o_{t} - \nu)^T \Sigma_1 (o_{t} - \nu) \right. \\
    & \left. - \frac{1}{2} (C x_{t} - \hat{\mu}_{t})^T \Sigma_2 (C x_{t} - \hat{\mu}_{t}) \right. \\
    & \left. - \frac{1}{2} (Cx_{t} - \mu_{t}^{(d)})^T \Sigma_3 (C x_{t} - \mu_{t}^{(d)}) \right),
\end{align*}
where
\begin{subequations}
\begin{eqnarray}
    \Sigma_1 = (R^{-1} + \hat{P}_t^{-1} - \Lambda_{t}^{(d)})^{-1}  \\
    \Sigma_2 = R^{-1} \Sigma_1 \hat{P}_t^{-1} \\
    \Sigma_3 = - R^{-1} \Sigma_1 \Lambda_{t}^{(d)} 
\end{eqnarray}
\end{subequations}
Therefore,
\begin{align*}
    \gamma_t(x_t) &\propto~ \int  p(o_{t}|x_{t}) \frac{y_t(o_t)}{\xi_t(o_t)} ~do_{t} \\
    &\propto~\mathrm{exp} \left( - \frac{1}{2}  x_{t}^T C^T (\Sigma_2 + \Sigma_3)C x  \right. \\
    & \quad \quad \quad \quad \left. +~ x_t^T (C^T \Sigma_2 \hat{\mu}_t + C^T \Sigma_3 \mu_{t}^{(u)})    \right) \\
    & \propto~\mathrm{exp} \left( - \frac{1}{2} x_{t}^T \Lambda_t^{(u)} x + x_t^T \eta_t^{(u)}    \right)
\end{align*}
with
\begin{align*}
    \Lambda_t^{(u)} &=  C^T (\Sigma_2 + \Sigma_3)C \\
    &= C^T R^{-1} (R^{-1} + \hat{P}_t^{-1} - \Lambda_{t}^{(d)})^{-1}  (\hat{P}_t^{-1} - \Lambda_{t}^{(d)}) C \\
    &= C^T (R + (\hat{P}_t^{-1} - \Lambda_{t}^{(d)})^{-1} )^{-1} C \\
    \eta_t^{(u)} &= C^T \Sigma_2 \hat{\mu}_t + C^T \Sigma_3 \mu_{t}^{(u)} \\
    &= C^T R^{-1}  (R^{-1} + \hat{P}_t^{-1} - \Lambda_{t}^{(d)})^{-1}  (\hat{P}_t^{-1} \hat{\mu}_{t} - \Lambda_{t}^{(d)} \mu_{t}^{(d)}) \\
    &= C^T R^{-1} (R^{-1} + \hat{P}_t^{-1} - \Lambda_{t}^{(d)})^{-1} (\hat{P}_t^{-1} \hat{\mu}_{t} - \eta_{t}^{(d)})
\end{align*}
\end{proof}

\subsection{Proof of Kalman updates \eqref{eq:kalman_prediction} and \eqref{eq:kalman_correction}}
\label{appendix:kalman_proof1}

\begin{proof}
\begin{align*}
    P_{t+1|t+1} &= (P_{t+1|t}^{-1} + C^T R^{-1} C)^{-1} \\
    &= P_{t+1|t} - P_{t+1|t} C^T (R + C P_{t+1|t} C^T )^{-1} C P_{t+1|t} \\
    &= (I - P_{t+1|t} C^T (R + C P_{t+1|t} C^T )^{-1} C) P_{t+1|t} \\
    &= (I - K_{t+1} C) P_{t+1|t},
\end{align*}
where
\begin{align*}
    K_{t+1} =  P_{t+1|t} C^T (R + C P_{t+1|t} C^T )^{-1}.
\end{align*}
Moreover, 
\begin{align*}
    \mu_{t+1|t+1} &=  P_{t+1|t+1} (P_{t+1|t}^{-1}\mu_{t+1 | t} + C^T R^{-1} \hat{o}_{t+1} )   \\
    &= (I - K_{t+1} C) P_{t+1|t} (P_{t+1|t}^{-1}\mu_{t+1 | t} + C^T R^{-1} \hat{o}_{t+1} )   \\
    &= (I - K_{t+1} C) \mu_{t+1 | t} + (I - K_{t+1} C) P_{t+1|t} C^T R^{-1} \hat{o}_{t+1} \\
    &= \mu_{t+1 | t} - K_{t+1} C\mu_{t+1 | t} + P_{t+1|t} C^T R^{-1} \\
    & \quad \quad \hat{o}_{t+1} - K_{t+1} C P_{t+1|t} C^T R^{-1} \hat{o}_{t+1} \\
    &= \mu_{t+1 | t} - K_{t+1} C\mu_{t+1 | t} \\
    & \quad \quad + (P_{t+1|t} C^T R^{-1}  - K_{t+1} C P_{t+1|t} C^T R^{-1}) \hat{o}_{t+1} \\
    &= \mu_{t+1 | t} - K_{t+1} C\mu_{t+1 | t} + (K_{t+1} (R + C P_{t+1|t} C^T ) \\
    & \quad \quad R^{-1}  - K_{t+1} C P_{t+1|t} C^T R^{-1}) \hat{o}_{t+1} \\
    &=  \mu_{t+1 | t} - K_{t+1} C\mu_{t+1 | t} + (K_{t+1}) \hat{o}_{t+1} \\
    &= \mu_{t+1 | t} + K_{t+1}( \hat{o}_{t+1} - C\mu_{t+1 | t} ).
\end{align*}
\end{proof}

\end{document}